\newtheorem{prop}{Proposition}
\icmltitlerunning{Joint-stochastic-approximation Random Fields with Application to Semi-supervised Learning}
\begin{document}

\twocolumn[
\icmltitle{Joint-stochastic-approximation Random Fields with Application to Semi-supervised Learning}
\icmlsetsymbol{equal}{*}

\begin{icmlauthorlist}
\icmlauthor{Yunfu~Song}{thuee}
\icmlauthor{Zhijian~Ou}{thuee}
\end{icmlauthorlist}

\icmlaffiliation{thuee}{Department of Electronic Engineering, Tsinghua University, Beijing, China}
\icmlcorrespondingauthor{Zhijian Ou}{ozj@tsinghua.edu.cn}

\vskip 0.3in
]


\printAffiliationsAndNotice{} 

\begin{abstract}
 
Our examination of deep generative models (DGMs) developed for semi-supervised learning (SSL), mainly GANs and VAEs, reveals two problems.
First, mode missing and mode covering phenomenons are observed in genertion with GANs and VAEs.
Second, there exists an awkward conflict between good classification and good generation in SSL by employing directed generative models.
To address these problems, we formally present joint-stochastic-approximation random fields (JRFs) --
a new family of algorithms for building deep undirected generative models, with application to SSL.
It is found through synthetic experiments that JRFs work well in balancing mode covering and mode missing, and match the empirical data distribution well.
Empirically, JRFs achieve good classification results comparable to the state-of-art methods on widely adopted datasets -- MNIST, SVHN, and CIFAR-10 in SSL,
and simultaneously perform good generation.

\end{abstract}

\section{Introduction}

Semi-supervised learning (SSL) considers the problem of classification when only a small subset of the observations have corresponding class labels, and aims to leverage the large amount of unlabeled data to boost the classification performance.
Several broad classes of methods for semi-supervised learning include generative models \cite{zhu05}, transductive SVM \cite{svm}, co-training \cite{co-training}, and graph-based methods (see \cite{zhu05} for more introduction).
In recent years, significant progress has been made on  representation, learning and inference with Deep Generative Models (DGMs) \cite{hm,dbn,dbm,vae,gan,gmmn,jsa}, and this stimulates an explosion of interest in utilizing DGMs for semi-supervised learning.

Semi-supervised learning with DGMs usually involves blending unsupervised learning and supervised learning. 
One justification is that the unsupervised loss (e.g. the negative marginal likelihood over the unlabeled data) provides additional regularization for the supervised loss over the labeled training data [Why Does Unsupervised Pre-training Help Deep Learning?]\cite{zhu05}.
Therefore, successful SSL methods often develop or adapt from unsupervised learning methods for DGMs.

Generally speaking, there are two broad classes of DGMs. 
Deep directed generative models, have been greatly advanced by inventing new models with new learning algorithms, such as Helmholtz Machines \cite{hm}, Variational Autoencoders (VAEs) \cite{vae}, Generative Adversarial Networks (GANs) \cite{gan}, auto-regressive neural networks \cite{auto-re} and so on.
Currently, the two most prominent DGM techniques - VAEs and GANs, are both directed models, and both have been successfully adapted to semi-supervised learning \cite{catgan,improved-gan,bad-gan,S2016Ladder}. 
In contrast, undirected generative models (also known as random fields \cite{koller2009probabilistic}, energy-based models \cite{energy-based}), e.g. Deep Boltzmann Machines (DBMs) \cite{dbm}, received less attentions with slow progress.
This is presumably because fitting undirected models is more challenging than fitting directed models \cite{koller2009probabilistic}.
In general, calculating the log-likelihood and its gradient is analytically intractable, because this involves evaluating the normalizing constant (also called the partition function in physics) and, respectively, the expectation with respect to the model distribution.

In this paper, we are interested in building deep random field models for semi-supervised tasks. Note that current SSL methods with VAEs and GANs have some undesirable problems.
First, VAEs are characterized by the mode covering behavior, where they tend to generate blurry samples when applied to natural images.
GANs suffer from the mode missing problem, where the resulting model is unable to capture all the variability in the true data distribution.
There are efforts to combine VAEs and GANs to balance mode covering and mode missing \cite{aae,avb}.
Second, in the contex of SSL, VAEs performs worse than with GANs.
For adaptation of GANs to SSL using the $\left(K+1\right)$-class discriminative objective, it is observed that good semi-supervised classification performance and good generation cannot be obtained at the same time \cite{improved-gan}; and it is further analyzed that good semi-supervised learning indeed requires a bad generator \cite{bad-gan}. 
This conflict of good classification and good generation is embarrassing and in fact obviates the original idea of generative model based SSL - successful generative training, which indicates good generation, provides regularization for finding good classifiers.
This problem is addressed in \cite{triple-gan} by using the $K$-class classifier and utilizing three networks with additional self-ensembling, and further in \cite{trianglegan} by utilizing four networks.

Note that for SSL to make up for the lack of labeled training data, good matching of model assumption with the structure of data is critical.
Random field modeling provide an interesting, alternative family of model spaces for exploring new generative models for SSL, outside of deep directed generative models, to address the above two problems - balancing mode covering and mode missing, and alleviating the conflict of generation and classification.

To build deep random field models for SSL, we note that recently, a new learning algorithm called Joint-stochastic-approximation (JSA) is developed initially for a broad class of directed generative models \cite{jsa} and further applied successfully to learning with random field language models \cite{asru}.
The key idea is to pair the target random field with an auxiliary directed generative model, which approximates the target random field but is easy to do sampling and point-wise likelihood evaluation.
Then we can maximize the target data log-likelihood and simultaneously minimizes the inclusive KL divergence between the target random field and the directed generator.
As shown in \cite{jsa}, this optimization problem can be solved based on stochastic approximation (SA) theory \cite{SA51}, by stochastically solving a system of simultaneous equations which have the form of expectations being equal to zeros.
We refer to this type of random field modeling and learning by JSA random fields, or JRFs for short. 

Remarkably, at a high-level, an independent work in \cite{coopnets} uses the same type of model optimization method as in \cite{asru}, but applies for unsupervised learning of images with some low-level differences, which we will detail in Related work.
Inspired by these successes of unsupervised learning with JRFs in \cite{coopnets,asru}, we examine its adaptation to SSL in this paper, which promisingly addresses the above two problems with existing directed generative modeling techniques for SSL.
Particularly, in this paper, as we mainly demonstrate the JRF methodology in semi-supervised image classification, we utilize Langevin sampling as used in \cite{coopnets} for image models, instead of Metropolis Independence Sampling (MIS) as used in \cite{asru} for language modeling.

The contributions of this work can be summarized as:

(1) We formally introduce Joint-stochastic-approximation random fields (JRFs), - a new family of algorithms which enables building deep undirected generative models for semi-supervised tasks, and show its theoretical consistency in the nonparametric limit.

(2) Synthetic experiments are given to help us analyze JRFs in-depth and understand their capabilities in balancing mode covering and mode missing, and alleviating the conflict of generation and classification.

(3) We empirically show that JRFs achieve state-of-the-art classification performances in semi-supervised tasks over the widely adopted image datasets - MNIST \cite{mnist}, SVHN \cite{svhn} and CIFAR-10 \cite{cifar}, and at the same time perform excellent image generation. 
To the best of our knowledge, this is the first demonstration that deep random field models are successfully applied in the challenging semi-supervised tasks.

\section{Related work}

\subsection{Semi-supervised learning}
In this work, we are mainly concerned with semi-supervised learning with deep generative models (DGMs).
The main idea is that generative training over unlabeled data provides regularization for finding good classifiers \cite{zhu05}.
From the perspective of regularization, virtual adversarial training (VAT) seeks virtually adversarial samples to smooth the output distribution of the classifier \cite{vat}, temporal ensembling \cite{temperal} and mean teacher \cite{meanteacher} maintain running averages of label predictions and model weights respectively for regularization.
These SSL methods also achieve good results. It can be seen that these SSL methods utilize different regularization from SSL with generative models. Their combination could yield further performance improvement in practice.

Early forms of semi-supervised learning date back to the practice of unsupervised pre-training or feature learning followed by supervised learning.
Restricted Boltzmann Machines (RBMs), a particular type of un-directed models, have once been popular for this purpose.
Classification RBMs are among one of the early efforts, which study SSL under the recent setting of performing unsupervised learning and supervised learning simultaneously, but only with a single generative model of observations and labels \cite{class-rbm}.
Currently most state-of-the-art SSL methods are based on deep directed generative models.
The most prominent DGM techniques that have been successfully adapted to semi-supervised learning, VAEs and GANs, are both directed models.
There are some studies that connects GANs with energy-based models \cite{ebgan,coopnets,kim16}, which will be introduced in the next Sub-section. SSL with deep random fields remains under-explored.

\subsection{Learning with random fields}

There is an extensive literature devoted to maximum likelihood (ML) learning of random fields (RFs), which are sometimes called energy-based models.
An important class of RF learning methods is stochastic approximation methods \cite{SA51}, which approximates the model expectations by Monte Carlo sampling for calculating the gradient. 
Basically, SA training iterates Monte Carlo sampling and SA update of parameters.
The classic algorithm, initially proposed in \cite{younes1989parametric}, is often called stochastic maximum likelihood (SML).
In the literature on training restricted Boltzmann machines (RBMs), SML is also known as persistent contrastive divergence (PCD) \cite{tieleman2008training} to emphasize that the Markov chain is not reset between parameter updates.

Roughly speaking, there are two types of deep random fields in the literature. Those with multiple stochastic hidden layers such as deep belief networks (DBNs) \cite{dbn} and deep Boltzmann machines (DBMs) \cite{dbm} involve very difficult inference and learning, which limits their use in SSL beyond of the form of pre-training.
Another type is to directly define an energy function through a deep neural network. In this case, the layers of the network do not represent latent variables but rather are deterministic transformations of input observations. 

The second type of deep random fields, which is also employed in this work, has been proposed several times in different contexts. They are once called deep energy models (DEMs) in \cite{ng11}, descriptive models in \cite{descriptor,coopnets}, generative ConvNet in \cite{wyn15}, and neural random fields in \cite{asru}.
The classic learning algorithm is still SML.
In \cite{ng11}, contrastive divergence \cite{cd} is used, which, in contrast to PCD, is biased in theory since it performs Monte Carlo sampling from the training observations.
In \cite{wyn15} Hamiltonian Monte Carlo (HMC) is used for Monte Carlo sampling.
A recent progress as studied in \cite{kim16,jsa,coopnets} is to pair the target random field with an auxiliary directed generative model, which approximates the target random field but is easy to do sampling and point-wise likelihood evaluation.
In addition to maximizing the target data log-likelihood, \cite{kim16} simultaneously minimizes the \emph{exclusive} KL divergence between the target random field and the directed generator, while \cite{jsa,coopnets} minimizes the \emph{inclusive} KL divergence.
Remarkably, the exclusive KL divergence includes a reconstruction term and an entropy term, which resembles the expression of the exclusive KL divergence in the variational learning. The entropy term is analytically intractable, so an ad hoc approximation is used in \cite{kim16} without strict justification.
In contrast, the inclusive KL divergence as employed by JSA does not have such annoying entropy term.

In JSA learning, the auxiliary directed generator serves as the proposal for constructing the MCMC operator for the target random field, for which there are a number of choices. 
Metropolis Independence Sampling (MIS) is used in \cite{jsa,asru} which is appropriate for handling discrete observations, while Langevin sampling is used in \cite{coopnets} which is more efficient than MIS when handling continuous observations.

There are some studies that connects GANs with energy-based models. The energy-based GAN (EBGAN) model \cite{ebgan}, which proposes to view the discriminator as an energy function, is shown to stabilize the training and generate high-resolution images.
The further work in \cite{cali} aims to address the inability of GANs to provide sensible energy estimates for samples. It connects \cite{ebgan} and \cite{kim16}, and show another two approximations for the entropy term. None of the studies examine their methods or models for SSL, except in EBGAN which performs moderately.

\section{Method}
\subsection{Background}
Our method is an application of the stochastic approximation (SA) framework \cite{SA51}, which basically provides a mathematical framework for stochastically solving a root finding problem, which has the form of expectations being equal to zeros.
Suppose that the objective is to find the solution $\lambda^*$ of $f(\lambda) = 0$ with
\begin{equation}
\label{eq:SA}
f(\lambda) = E_{z \sim p(\cdot; \lambda) } [ F(z;\lambda) ],
\end{equation}
where $\lambda \in R^d$ is a parameter vector of dimension $d$, and $z$ is an observation from a probability distribution $p(\cdot; \lambda)$ depending on $\lambda$, and $F(z;\lambda) \in R^d $ is a function of $z$.
Given some initialization $\lambda^{(0)}$ and $z^{(0)}$, a general SA algorithm iterates as follows.
\begin{enumerate}
	\item Generate $z^{(t)} \sim K_{\lambda^{(t-1)}}(z^{(t-1)},\cdot)$, a Markov transition kernel that admits $p(\cdot; \lambda^{(t-1)})$ as the invariant distribution.
	\item Set $\lambda^{(t)} = \lambda^{(t-1)} + \gamma_t F(z^{(t)};\lambda^{(t-1)}) $, where $\gamma_t$ is the learning rate.
\end{enumerate}

During each SA iteration, it is possible to generate a set of multiple observations $z$ by performing the Markov transition repeatedly 
and then use the average of the corresponding values of $F(z;\lambda)$ for updating $\lambda$, which is know as SA with multiple moves \cite{pami}.
This technique can help reduce the fluctuation due to slow-mixing of Markov transitions. 
The convergence of SA has been studied under various regularity conditions, e.g. satisfying that $\sum_{t=0}^\infty \gamma_t = \infty$ and $\sum_{t=0}^\infty \gamma_t^2 < \infty$. In practice, we can set a large learning rate at the early stage of learning and decrease to $1/t$ for convergence.

\subsection{Joint-stochastic-approximation Learning for Random Fields}

In the following we present the JSA algorithm for learning random fields. Consider a random field for modeling observation $x$ with parameter $\theta$:
\begin{displaymath}
p_{\theta}(x)=\frac{1}{Z(\theta)} \exp\left[  u_{\theta}(x) \right] 
\end{displaymath}
where $Z(\theta)=\int\exp(f(x;\theta))dx$ is the normalizing constant, $u_{\theta}(x)$ is the potential function.
It is usually intractable to maximize the data log-likelihood $log p_\theta(\tilde{x})$ for observed $\tilde{x}$, since the gradient involves expectation w.r.t. the model distribution, as shown below:
\begin{displaymath}
\frac{\partial }{\partial \theta}\log{p}_{\theta}(\tilde{x})=\frac{\partial}{\partial \theta}u_{\theta}(\tilde{x})-E_{p_\theta(x)}\left[\frac{\partial}{\partial \theta}u_{\theta}(x)\right]
\end{displaymath}

We can pair the target random field $p_{\theta}(x)$ with an auxiliary directed generative model $q_\phi(x,h) \triangleq q(h)q_\phi(x|h)$, so that by jointly training the two model, the marginal $q_\phi(x)$ approximates the target random field.
This idea has been studied in \cite{coopnets,kim16} by minimizing the \emph{exclusive} KL divergence $KL(q_\phi(x)||p_\theta(x))$, which includes an annoying entropy term.
The distinctive key idea of JSA learning is that 
in addition to maximizing w.r.t. $\theta$ the data log-likelihood, it simultaneously minimizes w.r.t. $\phi$ the \emph{inclusive} KL divergence $KL(p_\theta(x)||q_\phi(x))$ between the target random field and the directed generator. And fortunately, we can use the SA framework to solve the optimization problem.

Suppose that data $\mathcal{D} = \left\lbrace \tilde{x}_1, \cdots, \tilde{x}_n \right\rbrace $, which consists of $n$ observations drawn from the true but unknown data distribution $p_0(\cdot)$ with support $\mathcal{X} $.
$\tilde{p}(\tilde{x}) \triangleq \frac{1}{n} \sum_{k=1}^{n} \delta(\tilde{x} - \tilde{x}_n)$ denotes the empirical distribution.
Then we can formulate the maximum likelihood learning as optimizing
\begin{equation}
\label{eq:jrf_unsup_obj}
\left\{
\begin{split}
& \min_{\theta} KL\left[  \tilde{p}(\tilde{x}) || p_\theta(\tilde{x}) \right] \\
& \min_{\phi} KL\left[  p_\theta(x) || q_\phi(x) \right] \\
\end{split}
\right.
\end{equation}
By setting the gradients to zeros, the above optimization problem can be solved by finding the root for the following system of simultaneous equations:
\begin{equation}
\label{eq:jrf_unsup_gradient}
\left\{
\begin{split}
& E_{\tilde{p}(\tilde{x})}\left[\frac{\partial}{\partial\theta} logp_\theta(\tilde{x})\right]\\
=&E_{\tilde{p}(\tilde{x})}\left[\frac{\partial}{\partial\theta}u_\theta(\tilde{x})\right]-E_{p_\theta(x)}\left[\frac{\partial}{\partial\theta}u_\theta(x)\right]=0\\
&E_{p_\theta(x)}\left[ \frac{\partial}{\partial\phi} logq_\phi(x)\right]\\
=&E_{p_\theta(x) q_\phi(h|x)}\left[ \frac{\partial}{\partial\phi} logq_\phi(x,h)\right]=0\\
\end{split}
\right.
\end{equation}
It can be shown that Eq.(\ref{eq:jrf_unsup_gradient}) exactly follows the form of Eq.(\ref{eq:SA}), so that we can apply the SA algorithm to find its root and thus solve the optimization problem Eq.(\ref{eq:jrf_unsup_obj}). 

\begin{prop}
	If Eq.(\ref{eq:jrf_unsup_gradient}) is solvable, then we can apply the SA algorithm to find its root.
\end{prop}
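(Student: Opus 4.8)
The plan is to exhibit an explicit correspondence between the simultaneous system in Eq.(\ref{eq:jrf_unsup_gradient}) and the canonical root-finding form in Eq.(\ref{eq:SA}), after which the conclusion follows by directly invoking the SA framework quoted in the Background. Concretely, I would stack the two parameter vectors into a single parameter $\lambda = (\theta, \phi)$, so that finding a simultaneous root of the two equations becomes solving $f(\lambda) = 0$ for a single vector-valued map $f$ whose two blocks are the two left-hand sides of Eq.(\ref{eq:jrf_unsup_gradient}).

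The key construction is to choose the sampling variable $z$ and the distribution $p(\cdot;\lambda)$ so that both blocks arise as components of one expectation. I would set $z = (\tilde{x}, x, h)$ drawn from the product $p(z;\lambda) = \tilde{p}(\tilde{x})\, p_\theta(x)\, q_\phi(h|x)$, i.e. $\tilde{x}$ a fresh draw from the empirical distribution, $x$ a draw from the target random field, and $h$ a draw from the generator posterior. With this choice I would define the stacked function
$$F(z;\lambda) = \begin{pmatrix} \frac{\partial}{\partial\theta}u_\theta(\tilde{x}) - \frac{\partial}{\partial\theta}u_\theta(x) \\ \frac{\partial}{\partial\phi}\log q_\phi(x,h) \end{pmatrix}.$$
Taking the expectation of each block under $p(z;\lambda)$ then reproduces, line by line, the left-hand sides of Eq.(\ref{eq:jrf_unsup_gradient}): the $\theta$-block yields $E_{\tilde{p}}[\frac{\partial}{\partial\theta}u_\theta] - E_{p_\theta}[\frac{\partial}{\partial\theta}u_\theta]$, and the $\phi$-block yields $E_{p_\theta q_\phi(h|x)}[\frac{\partial}{\partial\phi}\log q_\phi(x,h)]$. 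This verifies $f(\lambda) = E_{z\sim p(\cdot;\lambda)}[F(z;\lambda)]$ exactly in the form of Eq.(\ref{eq:SA}).

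Having matched the forms, the remaining step is to confirm that the SA iteration is realizable. Step~2 of the general algorithm is immediate once $F$ is defined. For Step~1 I would construct the transition kernel $K_\lambda$ as a composition of three moves that each preserve the corresponding factor of $p(\cdot;\lambda)$: resampling $\tilde{x}$ from the fixed empirical distribution, advancing $x$ by an MCMC operator (Langevin or MIS) that admits $p_\theta$ as its invariant distribution with $q_\phi$ as proposal, and drawing $h$ from $q_\phi(h|x)$. The product kernel then admits $p(\cdot;\lambda)$ as its invariant distribution, as Step~1 requires. Finally, invoking the assumed solvability -- which guarantees a root $\lambda^*$ with $f(\lambda^*)=0$ exists -- together with the step-size conditions $\sum_t \gamma_t = \infty$, $\sum_t \gamma_t^2 < \infty$ stated in the Background, the convergence of $\lambda^{(t)}$ to a root follows from standard SA theory.

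I expect the main obstacle to be the bookkeeping in the first equation, where the non-parametric data term $E_{\tilde{p}}[\frac{\partial}{\partial\theta}u_\theta]$ and the model term $E_{p_\theta}[\frac{\partial}{\partial\theta}u_\theta]$ must be folded into a single expectation; absorbing the data draw $\tilde{x}$ into the augmented variable $z$, even though $\tilde{p}$ does not depend on $\lambda$, is the device that makes the match exact. A secondary point to handle with care is verifying that the chosen $K_\lambda$ genuinely leaves $p(\cdot;\lambda)$ invariant, which reduces to invariance of each of the three component moves.
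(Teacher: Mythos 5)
Your proposal is correct and takes essentially the same approach as the paper's proof: the identical recasting with $\lambda = (\theta,\phi)$, augmented variable $z = (\tilde{x}, x, h)$ drawn from $p(z;\lambda) = \tilde{p}(\tilde{x})\,p_\theta(x)\,q_\phi(h|x)$, and the same stacked $F(z;\lambda)$. Your additional verification of the transition kernel and step-size conditions goes beyond the paper's one-line proof but simply restates what the surrounding text already provides.
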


\begin{proof}
	This can be readily shown by recasting  Eq.(\ref{eq:jrf_unsup_gradient}) in the form of $f(\lambda) = 0$, with $\lambda \triangleq (\theta, \phi)^T$, $z \triangleq (\tilde{x}, x, h)^T$, $p(z; \lambda) \triangleq \tilde{p}(\tilde{x}) p_\theta(x) q_\phi(h|x)$, and

$
	F(z; \lambda) \triangleq  \left( \begin{array}{c}
	\frac{\partial}{\partial\theta}u_\theta(\tilde{x}) - \frac{\partial}{\partial\theta}u_\theta(x)\\
	\frac{\partial}{\partial\phi} logq_\phi(x,h)
	\end{array} \right).
$
\end{proof}

\begin{algorithm}[tb]
	\caption{Sample revision $x'\to x$ }\label{alg:revision}
	\begin{algorithmic}	
		\STATE SGLD:
		\STATE \textbf{Input:} $M, \gamma, x', \delta$
		\STATE \textbf{Output:} $x=x^{(M)}$
		\STATE set $x^{(0)}=x'$
		\FOR {$t=1,2,...,M$}
		\STATE $x^{(t)}=x^{(t-1)}+\gamma\frac{\partial }{\partial x}\log{p_\theta(x^{(t-1)})}+\delta U$, $U\sim N(0,I)$
		\ENDFOR
		\STATE
		\STATE SGHMC:
		\STATE \textbf{Input:} $M, \beta, \eta, x', \delta$
		\STATE \textbf{Output:} $x=x^{(M)}$
		\STATE set $x^{(0)}=x', v^{(0)}=0$
		\FOR {$t=1,2,...,M$}
		\STATE $v^{(t)}=\beta v^{(t-1)}+\eta\frac{\partial }{\partial x}\log{p_\theta(x^{(t-1)})}+\delta U$, $U\sim N(0,I)$
		\STATE $x^{(t)}=x^{(t-1)}+v^{(t)}$
		\ENDFOR
		
	\end{algorithmic}
\end{algorithm}

To apply the SA algorithm, we need to contruct a Markov transition kernel $K_{\lambda}(z^{(t-1)},\cdot)$ that admits $p(z; \lambda)$ as the invariant distribution. 
It is straightforward to draw from $\tilde{p}(\tilde{x})$, and the problem is to construct the Markov kernel $K_{\theta,\phi}((x^{(t-1)},h^{(t-1)}),\cdot)$ that admits $p_\theta(x) q_\phi(h|x)$ as the invariant distribution.
There are many options.
Inspired by using Langevin sampling with finite steps starting from the proposal defined by the generator in \cite{coopnets},
we employ SGLD (stochastic gradient Langevin dynamics) \cite{sgld} and SGHMC (Stochastic Gradient Hamiltonian Monte Carlo) \cite{sghmc} to perform the Markov transition.
Particularly, we use the Metropolis independence sampler (MIS), with $p_\theta(x) q_\phi(h|x)$ as the target distribution. The proposal is defined as follows:
\begin{enumerate}
	\item Do ancestral sampling by the generator, namely first drawing $h \sim p(h)$, and then drawing $x' \sim q_\phi(x'|h)$,
	\item Starting from $x'$, execute a sample revision process to obtain $x$, as shown in Alg.\ref{alg:revision}.
\end{enumerate}

Intuitively, the generator gives a proposal $(h,x')$, and then the system follows the gradient of $p_\theta(x)$ of the random field to revise $x'$ to $x$. The gradient term pulls samples moving to low energy region of the random field, while the noise term brings randomness.
According to \cite{sgld}, the discretization error of Langevin dynamics will be negligible so that the rejection probability in MIS will approach zero. So we may simply ignore this rejection step and always accept $(h,x)$ as samples from $p_\theta(x) q_\phi(h|x)$.

Since the parameters of the target and auxiliary models are jointly optimized based on the SA framework, the above method is referred to as JSA learning.
The resulting model is called JSA random field, or JRF for short.
Next, we examine SSL with JRFs.

\subsection{Semi-supervised learning with JRFs}
In semi-supervised tasks, we consider a random field for joint modeling of observation $x$ and class label $y$:
\begin{displaymath}
p_{\theta}(x,y)=\frac{1}{Z(\theta)} \exp\left[  u_{\theta}(x,y) \right] 
\end{displaymath}
Suppose that among the data $\mathcal{D} = \left\lbrace \tilde{x}_1, \cdots, \tilde{x}_n \right\rbrace $, only a small subset of the observations, for example the first $m$ observations, have class labels, $m \ll n$.
Denote these labeled data as $\mathcal{L} = \left\lbrace(\tilde{x}_1,\tilde{y}_1), \cdots, (\tilde{x}_m,\tilde{y}_m) \right\rbrace $, with $\tilde{p}(\tilde{x},\tilde{y})$ representing the  empirical distribution.
Then we can formulate the semi-supervised learning as jointly optimizing
\begin{equation}
\label{eq:jrf_semi_obj}
\left\{
\begin{split}
& \min_{\theta} KL\left[  \tilde{p}(\tilde{x}) || p_\theta(\tilde{x}) \right]
- \alpha \sum_{(\tilde{x},\tilde{y}) \sim \mathcal{L}} log p_\theta(\tilde{y}|\tilde{x}) \\
& \min_{\phi} KL\left[  p_\theta(x) || q_\phi(x) \right] \\
\end{split}
\right.
\end{equation}
By setting the gradients to zeros, the above optimization problem can be solved by finding the root for the following system of simultaneous equations:
\begin{equation}
\label{eq:jrf_semi_gradient}
\left\{
\begin{split}
& E_{\tilde{p}(\tilde{x})}\left[\frac{\partial}{\partial\theta} logp_\theta(\tilde{x})\right]
+ \alpha \sum_{(\tilde{x},\tilde{y}) \sim \mathcal{L}} \frac{\partial}{\partial\theta} log p_\theta(\tilde{y}|\tilde{x})\\
=&E_{\tilde{p}(\tilde{x})}\left[\frac{\partial}{\partial\theta}u_\theta(\tilde{x})\right]-E_{p_\theta(x)}\left[\frac{\partial}{\partial\theta}u_\theta(x)\right]\\
+& \alpha \sum_{(\tilde{x},\tilde{y}) \sim \mathcal{L}} \frac{\partial}{\partial\theta} log p_\theta(\tilde{y}|\tilde{x})=0\\
&E_{p_\theta(x)}\left[ \frac{\partial}{\partial\phi} logq_\phi(x)\right]\\
=&E_{p_\theta(x) q_\phi(h|x)}\left[ \frac{\partial}{\partial\phi} logq_\phi(x,h)\right]=0\\
\end{split}
\right.
\end{equation}
where $u_\theta(x) \triangleq log \sum_y \exp\left[  u_{\theta}(x,y) \right]$, with abuse of notation.
Similarly, it can be shown that Eq.(\ref{eq:jrf_semi_gradient}) exactly follows the form of Eq.(\ref{eq:SA}), so that we can apply the SA algorithm to find its root and thus solve the optimization problem Eq.(\ref{eq:jrf_semi_obj}).
The SA algorithm with multiple moves for SSL is shown in Algorithm 2.

The random field potential function $u_\theta(x,y)$ is implemented by a neural network with $x$ as the input and the output size is equal to the number of class labels.
The directed generator is implemented as:
$q_{\phi}(x|h)=G(h)+\sigma N(0,I)= N(G(h),\sigma^2I)$. 
$\sigma$ is the standard deviation (std) of the guassian noise. $G(h)$ is implemented as a neural network.

\begin{algorithm}[tb]
	\caption{Semi-supervised learning with JRFs}
	\label{alg:JRF}
	\begin{algorithmic}
		\REPEAT
		\STATE \underline{Monte Carlo sampling:}
		
		Draw a unsupervised minibatch $\mathcal{U} \sim \tilde{p}(\tilde{x}) p_\theta(x) q_\phi(h|x))$ and a supervised minibatch $\mathcal{S} \sim \tilde{p}(\tilde{x},\tilde{y}) p_\theta(x) q_\phi(h|x))$;
		
		\STATE \underline{SA updating:}
		
		Update $\theta$ by ascending:

		$\frac{1}{|\mathcal{U}|} \sum_{(\tilde{x},x,h) \sim \mathcal{U}}
		\left[\frac{\partial}{\partial\theta}u_\theta(\tilde{x}) - \frac{\partial}{\partial\theta}u_\theta(x) \right] $
		$
		+\frac{1}{|\mathcal{S}|} \sum_{(\tilde{x},x,h) \sim \mathcal{S}}
		\left[\frac{\partial}{\partial\theta}u_\theta(\tilde{x}) - \frac{\partial}{\partial\theta}u_\theta(x) + \alpha \frac{\partial}{\partial\theta} log p_\theta(\tilde{y}|\tilde{x})\right]
		$

		Update $\phi$ by ascending:
		
		$
		\frac{1}{|\mathcal{U}|} \sum_{(\tilde{x},x,h) \sim \mathcal{U}}
		\frac{\partial}{\partial\phi} logq_\phi(x,h) $
		$+ \frac{1}{|\mathcal{S}|} \sum_{(\tilde{x},x,h) \sim \mathcal{S}}
		\frac{\partial}{\partial\phi} logq_\phi(x,h)  
		$
		
		\UNTIL{convergence}
	\end{algorithmic}
\end{algorithm}

\subsection{Regularization}
Apart from the basic loss of JRFs, there are some regularization loss that is helpful to guide the direction of training.

\textbf{Confident loss} By maximizing $p_\theta(x_u)$ for unlabed $x_u$, the random field is not forced to predict a dominant class.
Considering $p_\theta(x,y)$ fit $p(x,y)$ well should give distinct prediction for $x_u$,
we incorporate an unbiased loss to minimize entropy of $p_\theta(y|x_u)$ from \cite{catgan}:
\begin{equation}
\label{eq:rc}
\begin{aligned}
R_c&=H(p_\theta(y|x_u))\\
&=\frac{1}{n_u}\sum_{i=1}^{n_u}[\sum_yp_\theta(y|x_u^{(i)})\log{p_\theta(y|x_u^{(i)})}]
\end{aligned}
\end{equation}
Note that adding $R_c$ to $L_D$ doesn't influence the optimal of $L_{Du}$ and $L_{Dl}$.
It performs ranking the optimal models trained with basic loss.
By leveraging $R_c$, the descriptor will predict classes of $x_u$ more confidently, thus called confident loss \cite{triple-gan}.

\textbf{Self-normalization loss} For random fields, the probability is only decided by relative value of energy, but not the absolute value.
We thus could control the absolute energy of training data $Z(x;\theta)=\sum_y\exp(u_\theta(x,y))$, 
where $Z(x;\theta)$ is the normalization constant for $p_\theta(y|x)$. 
\begin{equation}
\begin{cases}
Z(x;\theta)=\sum_y\exp(u_\theta(x,y))\\
R_s=(\frac{1}{n_u}\sum_{i=1}^{n_u}\log{Z(x;\theta)})^2
\end{cases}
\end{equation}
By leveraging $R_s$, the average energy of a batch unlabeled data will be attracted to $0$.
We use the mean statistic of a batch to perform soft constraint, while there still exists a bit bias from limited batch size.
\begin{figure*}[htb]
	\subfigure[training set]{
	\begin{minipage}{0.23\textwidth}
		\flushleft  
		\includegraphics[width=\textwidth]{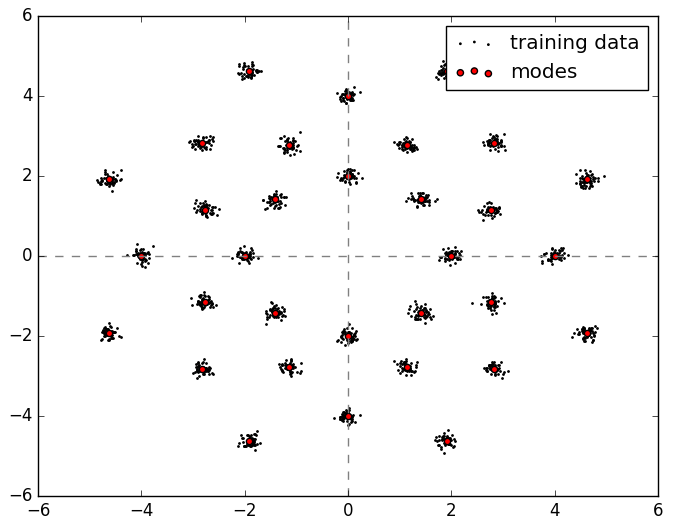}  
	\end{minipage}}
	\subfigure[GAN w/o FM generation]{
	\begin{minipage}{0.23\textwidth}  
\flushleft
		\includegraphics[width=\textwidth]{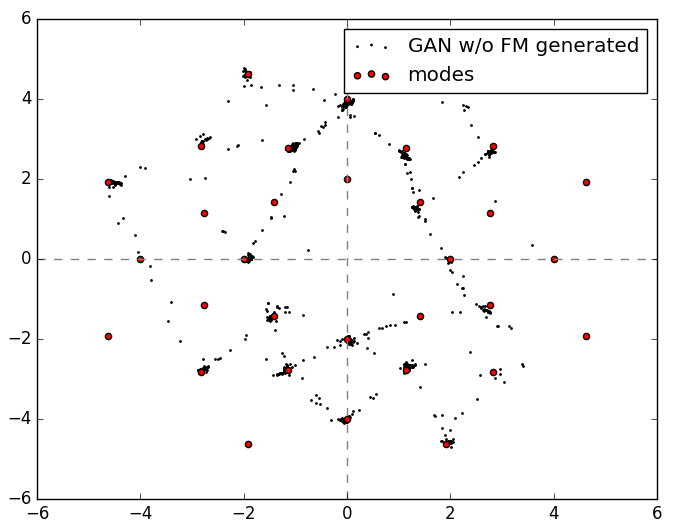}  
	\end{minipage}}
	\subfigure[GAN with FM generation]{  
	\begin{minipage}{0.23\textwidth}  
		\flushleft  
		\includegraphics[width=\textwidth]{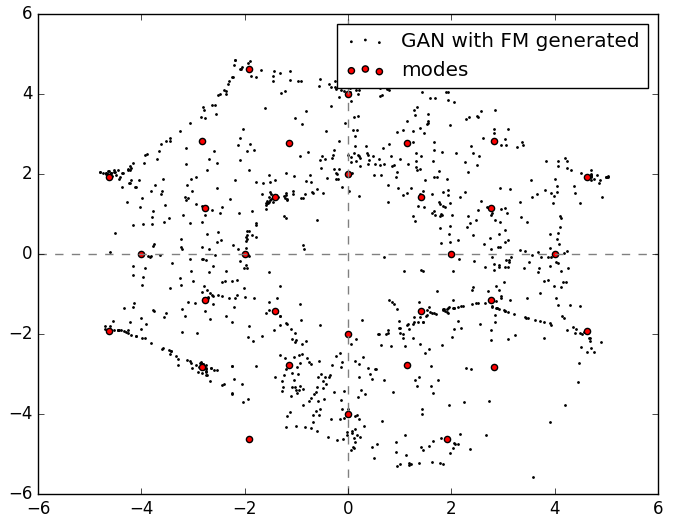}    
	\end{minipage}}
	\subfigure[EBGMs generation]{  
	\begin{minipage}{0.23\textwidth}  
		\flushleft  
		\includegraphics[width=\textwidth]{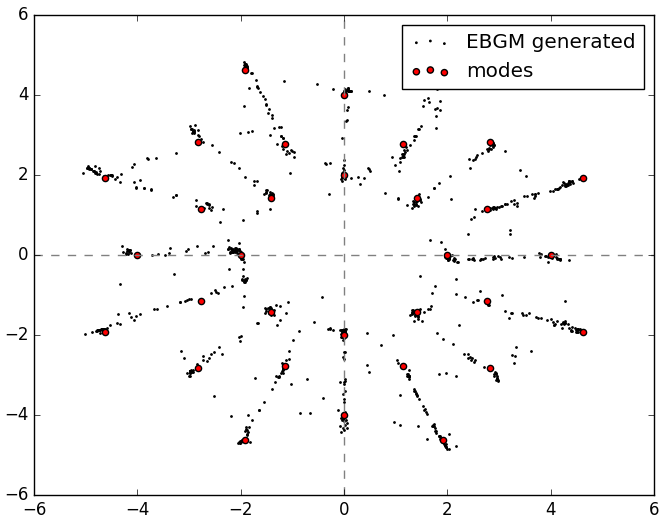}    
\end{minipage}}
	\subfigure[JRFs generation]{  
	\begin{minipage}{0.23\textwidth}  
		\flushleft  
		\includegraphics[width=\textwidth]{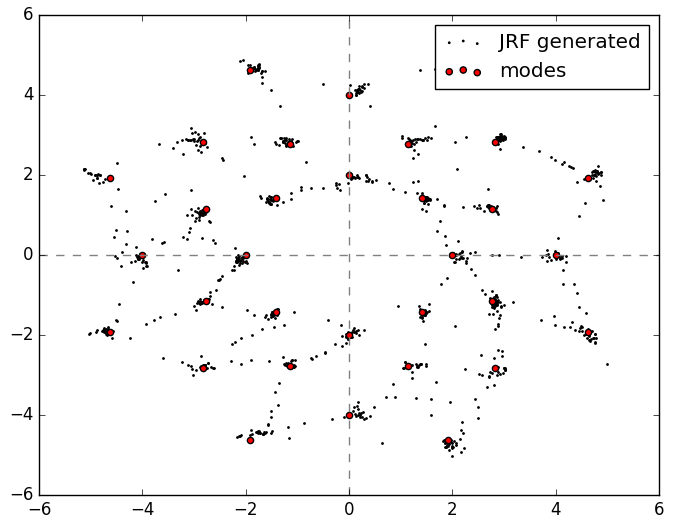}    
	\end{minipage}}
\hspace{6pt}
	\subfigure[JRFs revision]{  
	\begin{minipage}{0.23\textwidth}  
		\centering  
		\includegraphics[width=\textwidth]{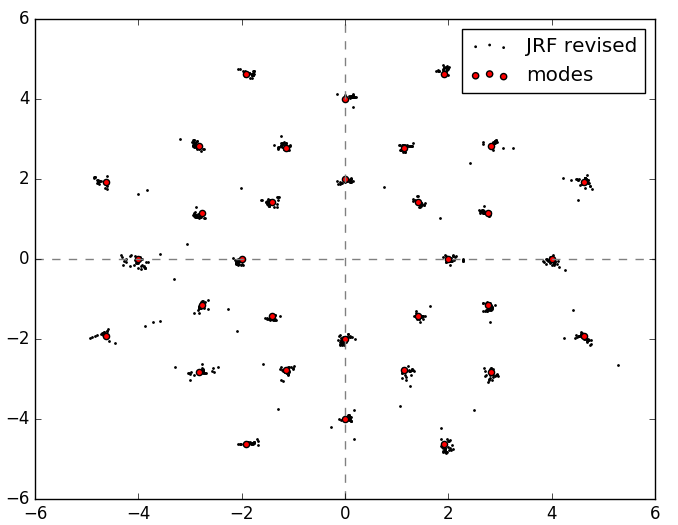}    
	\end{minipage}}
\hspace{10pt}
\subfigure[EBGMs energy density]{  
	\begin{minipage}{0.22\textwidth}  
		\centering  
		\includegraphics[width=1\textwidth]{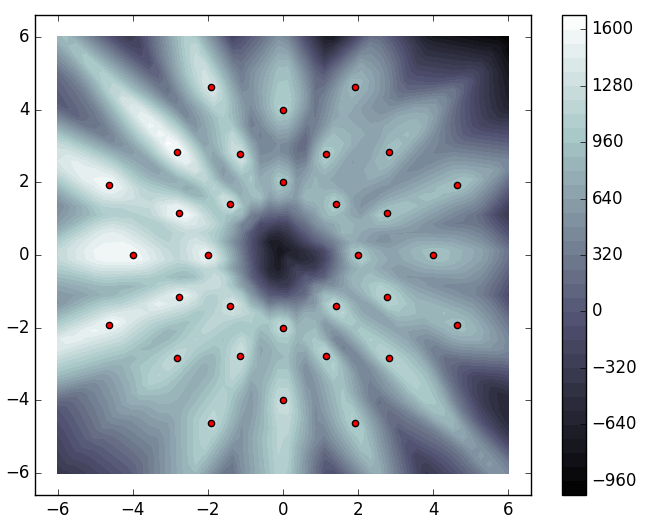}    
\end{minipage}}
\hspace{10pt}
	\subfigure[JRFs energy density]{  
	\begin{minipage}{0.22\textwidth}  
		\centering  
		\includegraphics[width=1\textwidth]{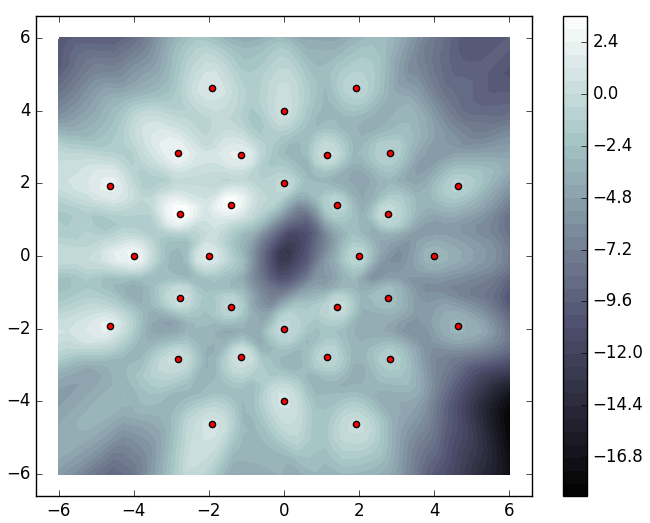}    
	\end{minipage}}
	\caption{Figures of the first toy experiment. (a) data distribution of training set. 
	(b)stochastic generation of GAN without feature matching.
	(c)stochastic generation of GAN with feature matching.
	(d)stochastic generation of EBGMs.
	(e)stochastic generation of JRFs.
	(f)samples of JRFs from a revision process on stochastic generation.
	(g)the learned energy density of the random field of EBGMs.
	(h)the learned energy density of the random field of JRFs.
	The red dots represent the modes of training distribution.
	Each generation contains 1,000 samples.
	For (g)(h), white represents low energy and black for high energy.
	The energy density of JRFs matches $p(x)$ better, while of EBGMs shows trailers around modes.
}
\label{fig:toy}
\end{figure*}  

\begin{figure*}[htb]
	\subfigure[Improved-GAN on SVHN]{
		\begin{minipage}{0.24\textwidth}
			\centering  
			\includegraphics[width=\textwidth]{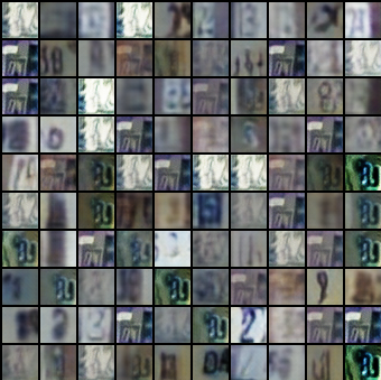}  
	\end{minipage}}
	\subfigure[semi-JRFs on SVHN]{
		\begin{minipage}{0.24\textwidth}  
			\centering  
			\includegraphics[width=\textwidth]{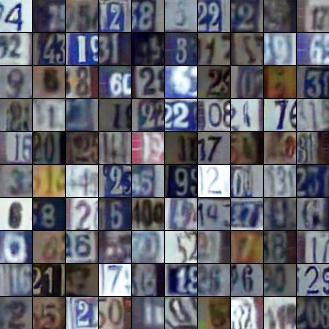}  
	\end{minipage}}
	\subfigure[Improved-GAN on CIFAR]{  
		\begin{minipage}{0.24\textwidth}  
			\centering  
			\includegraphics[width=\textwidth]{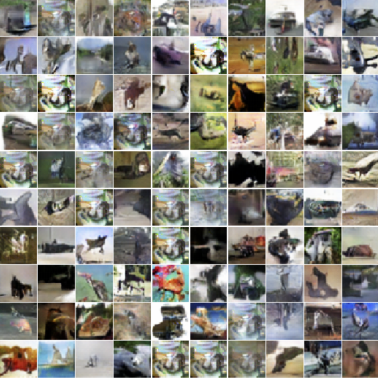}    
	\end{minipage}}
	\subfigure[semi-JRFs on CIFAR]{  
		\begin{minipage}{0.24\textwidth}  
			\centering  
			\includegraphics[width=\textwidth]{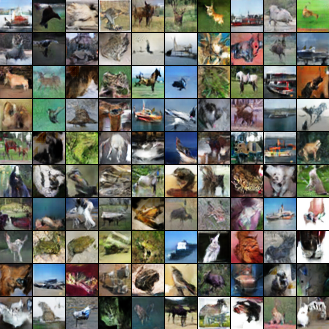}    
	\end{minipage}}
	\caption{Comparing images generated by Improved-GAN and semi-JRFs on SVHN and CIFAR-10. Improved-GAN generates collapsed and strange samples, while semi-JRFs preform diverse and realistic.
}
\label{fig:generate}
	\vskip -0.1in
\end{figure*}  
\section{Experiments}
\begin{table}[tbp]
		\vskip -0.2in
	\centering  
		\caption{Statistic of the first toy experiment.
		"Covered modes" means how many modes are covered by generated samples.
		"realistic ratio" means the proportion of generated samples covering a mode.
		JRFs revised samples are from a revision process on JRFs generated samples.}
	\label{tab:toy}
	\begin{tabular}{lcc}  
		\hline
		Methods &covered modes  &realistic ratio \\  
		\hline
		GANs w/o FM&$22.25\pm1.54$ &$0.90\pm0.01$\\         
		GANs with FM &$20.43\pm1.40$ &$0.42\pm0.04$\\        
		EBGMs &$28.14\pm0.68$ &$0.73\pm0.03$\\  
		JRFs generated    &$29.52\pm0.54$ &$0.84\pm0.01$\\
		JRFs revised &$30.75\pm0.43$ &$0.97\pm0.01$\\
		\hline
	\end{tabular}
	\vskip -0.2in
\end{table}
\begin{table*}[tbp]
	\centering  
	\caption{Comparison with state-of-the-art methods on three benchmark datasets,
		only methods without data augmentation contained.}
	\label{main result}
	\begin{tabular}{lccc}  
		\hline
		Methods &MNIST(\# error)  &SVHN(\% error) &CIFAR-10(\% error)\\  
		\hline
		CatGAN\cite{catgan} &$191\pm10$ &- &$19.58\pm0.46$\\         
		SDGM\cite{adgm} &$132\pm7$ &$16.61\pm0.24$ &-\\        
		Ladder network\cite{ladder} &$106\pm37$ &- &$20.40\pm0.47$\\
		ADGM\cite{adgm} &$96\pm2$ &22.86 &-\\
		FM\cite{improved-gan} &$93\pm6.5$ &$8.11\pm1.3$ &$18.63\pm2.32$\\
		ALI\cite{ali} &- &$7.42\pm0.65$ &$17.99\pm1.62$\\
		Triple-GAN\cite{triple-gan} &$91\pm58$ &$5.77\pm0.17$ &$16.99\pm0.36$\\
		VAT small\cite{vat} &$136$ &6.83 &14.87\\
		BadGAN\cite{bad-gan} &$79.5\pm9.8$ &$4.25\pm0.03$ &$14.41\pm0.30$\\
		\hline
		semi-JRFs &$97\pm10$ &$5.84\pm0.15$ &$15.51\pm0.36$
	\end{tabular}
\end{table*}
\subsection{Case study}
Considering that generative SSL methods often develop from unsupervised learning methods of DGMs, we first evaluate the unsupervised learning of JRFs, and some other generative models for comparison.
For directed generative models,
as in the spotlight, GANs are good at generating realistic data while observed being troubled with mode missing problem. 
Recently GAN-based methods \cite{improved-gan,bad-gan} reach a remarkable performance in SSL, while feature matching (FM for short) loss is employed to train the generator.
Considering undirected generative models,
\cite{kim16} propose a method similar to JRFs combining energy-based models and generative models, and we call this method EBGMs for referring simplicity. 
The main difference between EBGMs and JRFs is that EBGMs directly use samples of the generative model as approximate samples of random field (or energy-based model), and minimize a reversed KL divergence between the two models compared to JRFs, thus need to deal with an annoying entropy term.

We design a toy experiment to compare the representative methods mentioned above in unsupervised learning setting, by showing how well the generative model learns for GANs with or without FM, and how well the both models learn for EBGMs and JRFs.
The dataset is generated from a 2D gaussian mixture with 32 mixture components uniformly distributed on four concentric circles, containing 1,600 training examples (see Fig.\ref{fig:toy}(a)).
We use the Adam \cite{adam} optimizer to train each model with mini-batch stochastic gradient decent.
The same structure is used for each model:
2-50-50-2 with batch normalization\cite{batchnorm} for generative model, 
and 2-100-100-1 with wight normalization\cite{weightnorm} for random field and discriminator of GANs.

Fig.\ref{fig:toy} visually illustrates the performance of each method.
Considering that JRFs in fact use revised samples (first stochastically generated, then passed through a revision process) for training, we illustrate both generated and revised samples.
Notice that it is hard to perform evaluation on learned distribution for discriminator of GANs, while energy density representing unnormalized likelihood for random field can be evaluated, which is a superiority of undirected generative models.
Besides vision, we further rely on statistic for exact comparison. 
Aiming to quantify the mode missing phenomenon and generation quality, we execute the following estimation:

1) stochastically generate 100 samples

2) a mode is covered (or not missing) by defining as there exists generated sample locating closely to the mode less than a threshold, and a sample is realistic by defining as it covers a mode

3) count how many modes are covered and the proportion of realistic samples by averaging over 100 times generation

4) count the mean and standard deviation (std) across 10 respectively trained models

Tab.\ref{tab:toy} summarizes the statistic of samples from different methods.
From Fig.\ref{fig:toy} and Tab.\ref{tab:toy}, we observer as follows:
\begin{itemize}
\item GANs with or without FM are troubled in mode missing problem while undirected generative methods JRFs and EBGMs perform superior
\item GANs with FM have worse generation quality for generating many unrealistic samples out of modes, as a result of weak distribution matching\cite{bad-gan}
\item JRFs outperform EBGMs among generation quality and energy density learning, thus show better fitting across distribution of training dataset, the random field and the generative model
\item For JRFs, after a revision process, the generated data becomes more like
samples from training dataset (the random field fits $p(x)$ well), which shows the working mechanism and effect of revision process
\end{itemize}

In summary, this experiment illustrates that in JRFs the generative model properly draws samples as from the random field distribution (the generative model fits the random field well) and the random field fits the empirical distribution well.
Compared to GANs, JRFs have better generation capacity and avoid the mode missing problem, which suggests the generative model fits the empirical distribution better through a well-learned random field. 
Compared to the similar method EBGMs, JRFs exhibit more sophisticated matching across the empirical distribution, the random field and the generative model,
which probably shows the advantage of employing MCMC to obtain samples of the random field.
Compared to the worse generation by employing FM, JRFs would potentially perform successful generative training in SSL with theoretical consistency.

\subsection{Main results}

We evaluate the performance of the proposed method semi-JRFs for classification tasks on three widely adopted datasets: MNIST, SVHN and CIFAR-10.
MNIST consists of 60,000 training examples and 10,000 for testing, and each example is a handwriting digit image of size $28\times28$.
SVHN consist of 73,257 training examples and 26,032 testing examples, and each is a digit image of size $3\times32\times32$ with diverse background.
CIFAR-10 consists 50,000 training examples and 10,000 for testing, each is a natural image of size $3\times32\times32$.
There are 10 general object classes in CIFAR-10, like cat and ship.

In setting of SSL, there are limited labeled training examples available.
For general setting, we randomly choose 100 class-balanced labeled examples for MNIST, 1000 for SVHN, 4000 for CIFAR-10, and the rest training examples are unlabeled.
The results are averaged over 10 times training with different randomly chosen labeled data, and are estimated on the standard testing set.

These are some details in training on the three datasets. 
We use the same structure of Improved-GAN \cite{improved-gan}, employing weight normalization for random field and batch normalization for generator.
We optimize all networks with RMSProp where the decay rate is $0.9$.
The revision process contains 20 transition steps for MNIST, and 10 for SVHN and CIFAR-10.
During training, we in fact ignore the white noise of generated samples and revision process.
Without the noise, the convergence of JRFs is not essentially influenced but reached faster, and due to the plenty of modes of the three datasets, the generation still performs highly diverse. 
Confident loss $R_c$ is employed for training on MNIST and SVHN, and Self-normalization loss $R_s$ is used for CIFAR-10.
Combining with these regularization loss, semi-JRFs perform better in SSL empirically.

We compare the results on MNIST, SVHN, and CIFAR-10 of our best model with the state-of-art methods in Tab.\ref{main result}.
It suggests that semi-JRFs work well in SSL, and obtain classification results comparable to the state-of-art methods.

\subsection{Generated samples}

Fig.\ref{fig:generate} shows SVHN and CIFAR-10 samples generated by well trained models of Improved-GAN and semi-JRFs.
The images in Fig.\ref{fig:generate}(a)(c) are extracted from \cite{bad-gan}.
Comparing generated samples of improved-GAN and semi-JRFs, strange patterns and collapse problem occur in the former while the latter performs realistic and diverse.

We further quantitatively compare the generation quality by inception score\cite{improved-gan}(considered consistent with human judgment) on CIFAR-10: $3.87\pm0.03$ for Improved-GAN \cite{improved-gan}, $5.08\pm0.09$ for Triple-GAN \cite{triple-gan}, $7.35\pm0.09$ for semi-JRFs.
 Triple-GAN uses a three-player adversarial learning manner without FM, and addresses the bad generation problem of GANs in SSL. The generation of semi-JRFs reaches an obviously higher inception score to Triple-GAN, which demonstrates that semi-JRFs are effective in dealing with conflict of good generation and good classification.

\section{Conclusion}

In this paper, we formally present JRFs -- 
a new family of algorithms for building deep undirected generative models, with application to semi-supervised learning.
We provide theoretical results and conduct synthetic experiments to show that JRFs work well in balancing mode missing and mode covering problems which are widely observed in GANs and VAEs, and match the empirical data distribution well.
Empirically, JRFs achieve comparable performance to the state-of-art methods in SSL on three widely adopted datasets -- MNIST, SVHN, and CIFAR-10.
In addition to good classification performance, JRFs consistently exhibit good generation which demonstrates that JRFs address the conflict between good classification and good generation troubling GAN-based methods in SSL.
Besides directed generative models, JRFs explore undirected generative models applied to SSL which perform promising and are worthwhile to be further developed.

\bibliography{JRF}
\bibliographystyle{icml2018}

\clearpage
\onecolumn
\appendix
\section{Detailed architectures}
We list the detailed architectures of semi-JRFs on MNIST, SVHN and CIFAR-10 datasets in Tab.\ref{tab:mnist}, Tab.\ref{tab:svhn} and Tab.\ref{tab:cifar} respectively.
\begin{table*}[h]
	\centering  
		\caption{\textbf{MNIST}}
	\label{tab:mnist}
	\begin{tabular}{cc}  
		\hline
		\textbf{Random Field}  &\textbf{Generator}\\  
		\hline
		Input $28\times28$ Gray Image &Noise $h$ (100dims)\\    
		\hline
		MLP 1000 units,lReLU,weight norm &MLP 500 units,sotfplus,batch norm\\
		MLP 500 units,lReLU,weight norm &MLP 500 units,sotfplus,batch norm\\
		MLP 250 units,lReLU,weight norm &MLP 784 units,sigmoid\\
		MLP 250 units,lReLU,weight norm &\\
		MLP 250 units,lReLU,weight norm &\\
		MLP 10 units,linear,weight norm &\\
		\hline
	\end{tabular}
		\vskip -0.1in
\end{table*}
\begin{table*}[h]
	\centering  
		\caption{\textbf{SVHN}}
	\label{tab:svhn}
	\begin{tabular}{cc}  
		\hline
		\textbf{Random Field}  &\textbf{Generator}\\  
		\hline
		Input $32\times32$ Colored Image &Noise $h$ (100dims)\\    
		\hline
		$3\times3$ conv. 64 lReLU,weight norm &MLP 8192 units,ReLU,batch norm\\
		$3\times3$ conv. 64 lReLU,weight norm &Reshape $512\times4\times4$\\
		$3\times3$ conv. 64 lReLU,weight norm &$5\times5$ deconv. 256 ReLU, stride=2\\
		stride=2, dropout2d=0.5 &$5\times5$ deconv. 128 ReLU, stride=2\\
		$3\times3$ conv. 128 lReLU,weight norm &$5\times5$ deconv. 3 tanh, stride=2\\
		$3\times3$ conv. 128 lReLU,weight norm &\\
		$3\times3$ conv. 128 lReLU,weight norm &\\
		stride=2, dropout2d=0.5 &\\
		$3\times3$ conv. 128 lReLU,weight norm &\\
		NIN, 128 lReLU, weight norm &\\
		NIN, 128 lReLU, weight norm &\\
		MLP 10 units, linear, weight norm &\\
		\hline
	\end{tabular}
		\vskip -0.1in
\end{table*}
\begin{table*}[h]
	\centering  
		\caption{\textbf{CIFAR-10}}
	\label{tab:cifar}
	\begin{tabular}{cc}  
		\hline
		\textbf{Random Field}  &\textbf{Generator}\\  
		\hline
		Input $32\times32$ Colored Image &Noise $h$ (100dims)\\    
		\hline
		$3\times3$ conv. 128 lReLU,weight norm &MLP 8192 units,ReLU,batch norm\\
		$3\times3$ conv. 128 lReLU,weight norm &Reshape $512\times4\times4$\\
		$3\times3$ conv. 128 lReLU,weight norm &$5\times5$ deconv. 256 ReLU, stride=2\\
		stride=2, dropout2d=0.5 &$5\times5$ deconv. 128 ReLU, stride=2\\
		$3\times3$ conv. 256 lReLU,weight norm &$5\times5$ deconv. 3 tanh, stride=2\\
		$3\times3$ conv. 256 lReLU,weight norm &\\
		$3\times3$ conv. 256 lReLU,weight norm &\\
		stride=2, dropout2d=0.5 &\\
		$3\times3$ conv. 512 lReLU,weight norm &\\
		NIN, 256 lReLU, weight norm &\\
		NIN, 128 lReLU, weight norm &\\
		MLP 10 units, linear, weight norm &\\
		\hline
	\end{tabular}
		\vskip -0.2in
\end{table*}
\section{SSL toy experiment}
Despite the proper comparison in unsupervised learning, we further exhibit the performance of semi-JRFs in SSL setting via another synthetic experiment.
The dataset is a 2D gaussian mixture with 16 mixture component uniformly distributed on two concentric circles, and the two circles represent two different classes, each class with 4 labeled data and 400 unlabeled data.
This is in fact a simple classification task that many methods can learn a proper class boundary just between the two circles, and we mainly focus on the learned data distribution $p(x),p(x|y)$ of semi-JRFs, which cannot be properly exhibited in directed generative models developed for SSL.

Fig.\ref{fig:toy2} shows learned data distribution $p_\theta(x),p_\theta(x|y=1),p_\theta(x|y=2)$ of the random field in semi-JRFs.
For each learned data distribution, semi-JRFs fit the empirical distribution well in this simple SSL setting.
\begin{figure*}[h]
	\subfigure[training set]{
		\begin{minipage}{0.24\textwidth}
			\centering  
			\includegraphics[width=\textwidth]{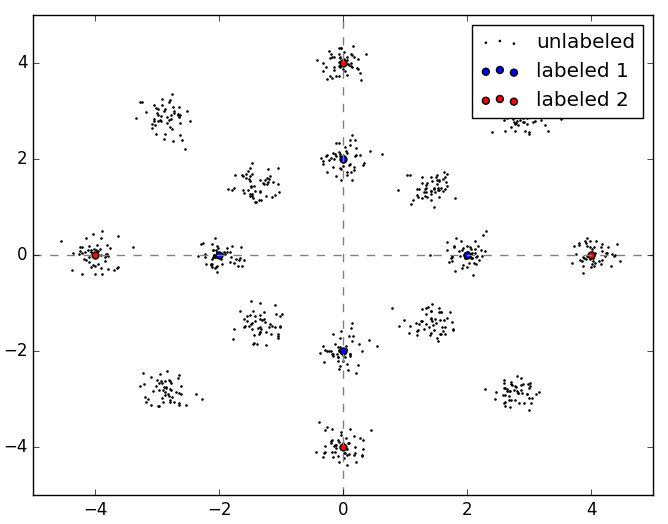}  
	\end{minipage}}
	\subfigure[learned $p_\theta(x)$]{
		\begin{minipage}{0.24\textwidth}  
			\centering  
			\includegraphics[width=\textwidth]{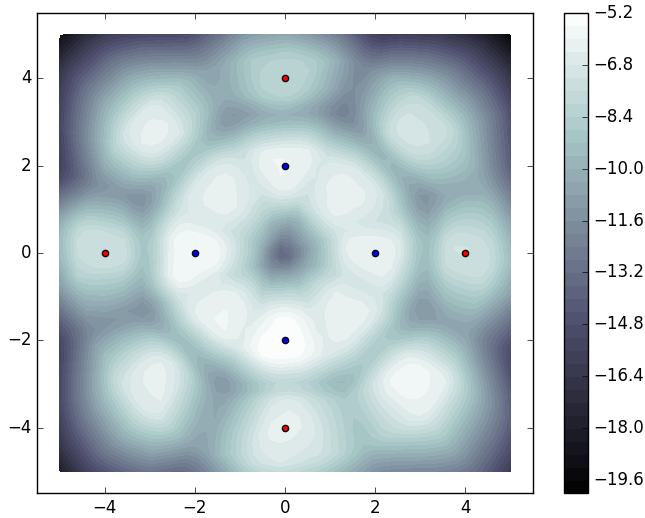}  
	\end{minipage}}
	\subfigure[learned $p_\theta(x|y=1)$]{  
		\begin{minipage}{0.24\textwidth}  
			\centering  
			\includegraphics[width=\textwidth]{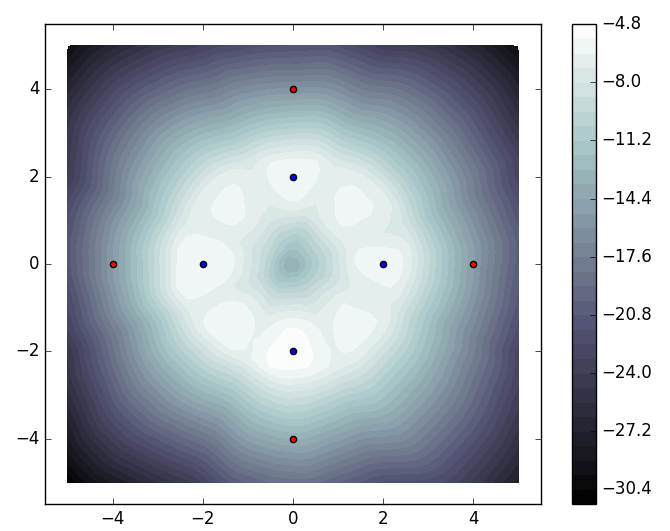}    
	\end{minipage}}
	\subfigure[learned $p_\theta(x|y=2)$]{  
		\begin{minipage}{0.24\textwidth}  
			\centering  
			\includegraphics[width=\textwidth]{fig/2_pxy1.png}    
	\end{minipage}}
	\caption{Figures of the second toy experiment. (a) data distribution of training set. 
		(b)the learned $p_\theta(x)$ of the random field of semi-JRFs.
		(c)the learned $p_\theta(x|y=1)$ of the random field.
		(d)the learned $p_\theta(x|y=2)$ of the random field.
		Each class has 4 labeled data, blue dots for class 1 and red for class 2.
		For energy density figures, white represents low energy and black for high energy.
	}
	\label{fig:toy2}
\end{figure*}  
\section{Interpolating generation}
\begin{figure}[htb]
	\centering  
	\includegraphics[width=0.8\textwidth]{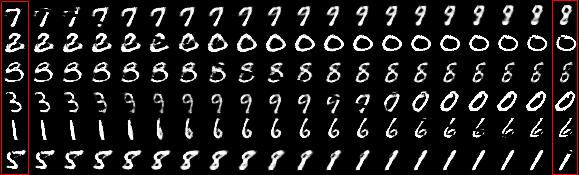}  
	\caption{Interpolation of semi-JRFs on MNIST.
		The leftmost and rightmost columns are from stochastic generation.
		The other columns show images generated by interpolating between them in latent variable space.
	} 
	\label{fig:inp}
	\vskip -0.2in
\end{figure}  

Fig.\ref{fig:inp} shows that the generator smoothly outputs transitional samples as the latent variable $h$ changed linearly, trained on MNIST dataset. 
This interpolating generation exhibits the latent variable $h$ captures some structural features of MNIST examples and is well-learned by the generator.

\section{Conditional generation}
\begin{figure}[htb]
	\centering  
	\includegraphics[width=0.6\textwidth]{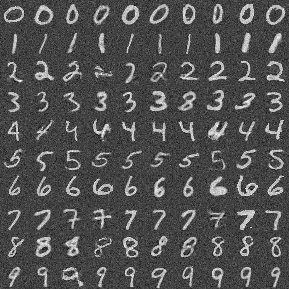}  
	\caption{A pathway of conditional generation for semi-JRFs with MNIST.
		The generating process is described in text.
		The contrast loss of images is from the revision process that the pixel value of background becomes more than 0. 
		Each line contains 10 different images conditioned on the same label.
	} 
	\label{fig:cond}
	\vskip -0.2in
\end{figure}  
Fig.\ref{fig:cond} shows a pathway of conditional generation in semi-JRFs on MNIST. 
Notice that the generator is not informed by labels, thus it cannot perform conditional generation.
While the random field can properly learn the data distribution conditioned on labels $p(x|y)$ and a revision process helps generated samples shift to low-energy region, we can execute as follows:

1) generate plenty of samples unconditionally

2) predict a label $y$ for each sample $x$ by the random field

3) a revision process starts from $x$ and towards low-energy region of $p(x|y)$

The resulting samples are regarded as conditional generation of $y$.
Considering a sample $x$ predicted with label $y$ would be a proper proposal of class $y$ examples (maybe wrongly predicted, but with similar structural features),
the revision process can effectively push $x$ to high-probability region of $p(x|y)$ where wrong prediction would be rare.
For GAN-based methods like Improved-GAN \cite{improved-gan}, the generated samples would be regarded as fake by the discriminator and thus cannot obtain a confident class-prediction.
Directly choosing samples with given prediction would accumulate the error of both nets while above-mentioned strategy for semi-JRFs would decrease the error inside the random field.

\end{document}